\newcommand{\cupdot}{\mathbin{\dot{\cup}}} 
\noindent\textit{Proof. }\BODY\hfill$\Box$\par
\newcommand{\set}[1]{\mathcal{#1}}
\newcommand{\da}{\mathbin{:=}}
\newcommand{\abb}{\mathbin{:}}
\newcommand{\lmm}{\{\!|}
\newcommand{\rmm}{|\!\}}
\newcommand{\RM}[1]{\MakeUppercase{\romannumeral #1{.}}}
\begin{document}
\title{Weisfeiler–Lehman meets Events:\\ An Expressivity Analysis for Continuous-Time Dynamic Graph Neural Networks}
\titlerunning{Weisfeiler-Lehman meets Events}
\author{\href{https://orcid.org/0000-0002-2984-2119}{Silvia Beddar-Wiesing}\thanks{Corresponding author. Both authors contributed equally.} \and
\href{https://orcid.org/0000-0001-7912-0969}{Alice Moallemy-Oureh}}
\authorrunning{S.~Beddar-Wiesing and A.~Moallemy-Oureh}
%
\institute{University of Kassel, Germany \\
\email{\{s.beddarwiesing, amoallemy\}@uni-kassel.de}
}
\maketitle              

\begin{abstract}

Graph Neural Networks (GNNs) are known to match the distinguishing power of the 1-Weisfeiler-Lehman (1-WL) test, and the resulting partitions coincide with the unfolding tree equivalence classes of graphs. Preserving this equivalence, GNNs can universally approximate any target function on graphs in probability up to any precision. However, these results are limited to attributed discrete-dynamic graphs represented as sequences of connected graph snapshots. Real-world systems, such as communication networks, financial transaction networks, and molecular interactions, evolve asynchronously and may split into disconnected components. In this paper, we extend the theory of attributed discrete-dynamic graphs to attributed continuous-time dynamic graphs with arbitrary connectivity. To this end, we introduce a continuous-time dynamic 1-WL test, prove its equivalence to continuous-time dynamic unfolding trees, and identify a class of continuous-time dynamic GNNs (CGNNs) based on discrete-dynamic GNN architectures that retain both distinguishing power and universal approximation guarantees. Our constructive proofs further yield practical design guidelines, emphasizing a compact and expressive CGNN architecture with piece-wise continuously differentiable temporal functions to process asynchronous, disconnected graphs.

\keywords{Dynamic Graph Neural Network  \and Continuous-Time Dynamic Graphs \and GNN Expressivity \and Disconnected Graphs \and Weisfeiler-Lehman Test}
\end{abstract}

\section{Introduction}
The expressive power of Graph Neural Networks (GNNs) can be viewed through their ability to distinguish non‐isomorphic graphs or their capacity to approximate target functions defined on graph‐structured data. It is already well investigated that standard Message‐Passing GNNs cannot distinguish more graphs than the 1–Weisfeiler–Lehman (1–WL) test \cite{xu2018powerful}. This result has recently been extended to attributed and discrete‐time dynamic settings in \cite{beddar2024weisfeiler}\footnote{We build on \cite{beddar2024weisfeiler} and follow its notation and preliminaries. To keep the presentation concise, we do not recapitulate them here and suggest consulting that work first.}.  

Further, using the Stone–Weierstraß theorem, GNNs have also been shown to serve as universal approximations \cite{azizian2021expressive}. More recent work additionally provides constructive universality results for both attributed and discrete‐dynamic graphs by exploiting the equivalence between 1–WL and unfolding‐tree partitions \cite{d2021unifying,beddar2024weisfeiler}.  

However, many practical systems evolve in continuous time and may fragment into disconnected components, as seen in asynchronous interaction streams, communication networks, or chemical reaction processes \cite{arx_rossi_2020,inproc_trivedi_2019,nikitin2020dracon}. Existing theory does not account for such continuous‐time dynamics or arbitrary connectivity patterns, limiting our understanding of GNNs in these settings.

In this paper, we extend both distinguishing and approximation guarantees to continuous‐time dynamic graphs (CTDGs), which are defined in Sec.~\ref {section_foundation}. First, in Sec.~\ref{section_preliminaries} we introduce a continuous-time analogue of the 1–WL test for CTDGs and define corresponding continuous‐time unfolding trees. Then, we prove the correspondence between the continuous-time 1-WL and unfolding tree equivalence. Second, we leverage this correspondence to constructively show in Sec.~\ref{section_universal_apx_CGNN} that a broad class of continuous‐time dynamic GNNs (CGNNs) exactly inherits the distinguishing power of continuous-time 1–WL and achieves universal approximation for any measurable function respecting the induced equivalence in probability and to any precision. Finally, we discuss limitations and conclude our results in Sec.~\ref{section_conclusion}.\\
\begin{center}
\flushleft\textbf{Notation}\\
\begin{minipage}[t]{.48\linewidth}
\small
    \begin{center}
\tablehead{\hline\textbf{Notation} &\\\hline}\label{tab_notation}
\tabletail{\hline}
\begin{tabular}{|l|l|}
	\hline
	$\mathbb{N}_0$ & natural numbers starting at $0$\\
    $\mathbb{R}_{\geq 0}$ & non negative real numbers\\
	$\mathbb{R}^k$ & $\mathbb{R}$ vector space of dimension $k$\\
	\hline
	$\mathbb{0}$ & zero vector\\
	$|a|$ & absolute value of a real $a$ \\
	$\|\cdot \|$ & norm on $\mathbb{R}$ \\
	$\|\cdot \|_\infty$ & $\infty$-norm on $\mathbb{R}$ \\
	$|M|$ & number of elements of a set $M$\\ 
	$[n], \,n\in \mathbb{N}$ & sequence $1,2,\ldots, n$ \\
	$[n]_{0}, \,n\in \mathbb{N}_{0}$ & sequence $0,1,\ldots, n$ \\
	\hline
\end{tabular}
\end{center}
\end{minipage}
\hspace{.01\linewidth} 
\begin{minipage}[t]{.48\linewidth}
\small
\begin{center}
\tablehead{\hline\textbf{Notation} &\\\hline}\label{tab_notation}
\tabletail{\hline}
\begin{tabular}{|l|l|}
\hline
	$\perp$ & undefined; non-existent element\\
	$\{\cdot\}$ & set \\
	$\{\!\vert\cdot\vert\!\}$ & multiset, i.e. set allowing \\
	& multiple appearances of entries\\
	$(x_i)_{i \in I}$ & vector of elements $x_i$ \\
	& for indices in set $I$\\
	$[v|w]$ & stacking of vectors $v,w$ \\
	\hline
	$\subseteq$ & sub(multi)set \\
	$\subset$ & proper sub(multi)set \\
	$M \times N$ & factor set of two sets $M$ and $N$\\
	\hline
\end{tabular}
\end{center}
\end{minipage}
\end{center}

\section{Foundations}\label{section_foundation}

In this section, we formalize continuous‐time dynamic graphs, introduce the corresponding notion of graph isomorphism, and present a generic GNN architecture often
used for processing them.

\begin{definition}[continuous-time dynamic Graph]\label{def_continuous_dynamic_graph}
Let ${\set{T}=[t_0, \ldots, t_T]\subset \mathbb{R}_{\geq 0}}$ be a finite set of timestamps. Then a \textbf{continuous-time dynamic graph} (CDG) ${G=(g_0,\set{O})}$ includes a start graph $g_0=(\set{V}_0, \set{E}_0, \alpha_0, \omega_0)$ with finite node set $\set{V}_0$, edges $\set{E}_0$, corresponding attribute mappings $\alpha_0, \omega_0$, and a set ${\set{O}=\{o_0,\ldots,o_T\}}$ of events. For all $i = 0, \ldots, T \in \mathbb{N}_0$ an event $o_i=(x, k, \gamma, t_i)$ determines on which item $x$ (node or edge) with attribute $\gamma\in \set{A}$\footnote{According to \cite{beddar2024weisfeiler}, without loss of generality, we consider $\set{A} \subseteq \mathbb{R}^d, d \in \mathbb{N}$ to be the common attribute space for nodes and edges at all times.} an event of type $k$ (addition, deletion or attribute change) happens at time $t_i$. 
\end{definition} 

For the internal representation of continuous-time dynamic graphs in GNNs, it is often necessary to also store the global structural information. Therefore, let $g_i=(\set{V}_i, \set{E}_i, \alpha_i, \omega_i)$ 
be the \textbf{current graph} at time $t_i$ obtained by updating the start graph with the events $o_j$ up to time $t_i$. Further, we refer to the \textbf{domain of CDGs} with bounded node and timestamp sets as $\set{G}$.


\begin{definition}[Graph Isomorphism]\label{def_graph_iso}
Let ${G=(g_0,\set{O})}$ and ${G'=(g_0',\set{O}')}$ be two CDGs. Further, let $g_i=(\set{V}_i,\set{E}_i,\alpha_i,\omega_i)$ and $g'_i=(\set{V}'_i,\set{E}'_i,\alpha'_i,\omega'_i)$ be the current graphs of $G$ and $G'$ at time $t_i$, respectively.
Then, $G$ and $G'$ are \textbf{isomorphic}, i.e., $G \approx G'$, if and only if there exist 
bijective functions $\phi_i\abb \set{V}_i\rightarrow\set{V}'_i$ and $\psi_i\abb \set{A}\rightarrow \set{A}'$, such that for all $i = 0, \ldots, T \in \mathbb{N}_0$ it holds
 \begin{itemize}
    \item $v \in \mathcal{V}_i \, \Leftrightarrow \, \phi_i(v) \in \mathcal{V}_i'\quad\forall \; v \in \mathcal{V}_i$,
    \item ${\{u, v\} \in \mathcal{E}_i \, \Leftrightarrow \, \{\phi_i(u), \phi_i(v)\} \in \mathcal{E}'_i \quad \forall \; \{u,v\} \in \mathcal{E}}_i$,\vspace{.2cm}
    \item $\psi_i\left(\alpha_i(v)\right) = \alpha_i'(\phi_i(v)) \quad \forall \; v\in \mathcal{V}_i$,
    \item ${\psi_i\left(\omega_i(\{u, v\})\right) = \omega_i'(\{\phi_i(u), \phi_i(v)\}) \quad \forall \; \{u, v\} \in \mathcal{E}_i}$.
 \end{itemize}
\end{definition}

Processing graphs in continuous-time faces the challenge that only the local changes are explicitly given, while the global context is only implicitly present. However, several GNNs manage the global information by internally storing the current graphs per event. This allows for a GNN architecture similar to that for discrete dynamic graphs, while considering the actual timestamps.

\begin{definition}[Continuous Dynamic GNN (CGNN)]\label{CGNN}
Given a CDG ${G\in\set{G}}$, a \textbf{Continuous-Time Dynamic Graph Neural Network (CGNN)} is defined using a recursive function $f$ that is (piece-wise) continuously differentiable (between consecutive events) for temporal modeling as follows.

Let $N=|\set{V} := \bigcup_{t_i\in\set{T}}\set{V}_i|$ 
be the maximal number of nodes appearing in the CDG. Further, let \texttt{SGNN} be a GNN that is a universal approximator on static attributed undirected homogeneous graphs (SAUHGs) as defined in \cite{beddar2024weisfeiler}. Then, the start embeddings at $t_0$ are  determined for all nodes $v\in\set{V}$\footnote{If $v\notin\set{V}_i$ for a timestamp $t_i$, then the \texttt{SGNN} returns empty node and state embeddings $\bm{h}_v(t_i)=\bm{q}_v(t_i)=\perp$, see \cite{beddar2024weisfeiler}.} as
\begin{equation*}
        \bm{q}_1(t_0),\ldots,\bm{q}_N(t_0) \da \bm{h}_1(t_0),\ldots,\bm{h}_N(t_0) \da 
            \texttt{SGNN}(g_{0}).
\end{equation*}

The hidden state representations $\bm{q}_v(t_i)\in\mathbb{R}^{s}$ for all nodes $v\in\set{V}$ at timestamps $t_i,\ i> 0$ are calculated using the hidden node representations $\bm{h}_v(t_i)\in \mathbb{R}^{r}$, with $r=s$\footnote{In general, the hidden dimensions may be unequal; however, to ensure the correct dimension, an MLP must be incorporated into the initialization of $\bm{q}_v(t_0)$ for all $v\in\set{V}$.}, given by the SGNN as follows:
\begin{align*}
        \bm{q}_v(t_i) & \da  f(\bm{q}_v(t_{i-1}), \bm{h}_v(t_i)), \\
        \bm{h}_1(t_i),\ldots,\bm{h}_N(t_i) & \da \texttt{SGNN}(g_{i}),
\end{align*}
where ${f:\mathbb{R}^{s\times r} \rightarrow \mathbb{R}^{s}}$ is a neural architecture for temporal modeling (in the methods surveyed in \cite{skarding2021foundations}, $f$ is usually an RNN or an LSTM). 

The stacked version over all nodes of the CGNN is then:
\begin{equation*}\label{stack_eq_cont}
        \bm{Q}(t_0)  = \bm{H}(t_0) = \texttt{SGNN}(g_{0}),\qquad \bm{H}(t_i)  = \texttt{SGNN}(g_{i}),\qquad \bm{Q}(t_i)  = F(\bm{Q}(t_{i-1}),\bm{H}(t_i)),
\end{equation*}
where ${\bm{H}(t_i)\in \mathbb{R}^{N\times r},\,\bm{Q}(t_i)\in \mathbb{R}^{N\times s}}$, ${F:\mathbb{R}^{N\times s} \times \mathbb{R}^{N\times r}\rightarrow \mathbb{R}^{N \times s}}$ for $N$ nodes, an $r$-dimensional hidden space and an $s$-dimensional state space. Applying $F$ corresponds to component-wise applying $f$ for each node \cite{skarding2021foundations}.

In total, a function $\texttt{READOUT}_{\text{dyn}}$ will take as input $\bm{Q}(t_T)$ and return a suitable output for the considered task, so that altogether the CGNN is described as
\begin{equation*}
    \varphi(t_T, G, \set{V}) = \texttt{READOUT}_{\text{dyn}}(\bm{Q}(t_T)).
\end{equation*}

\end{definition}

\section{Preliminaries}\label{section_preliminaries}

To establish the approximation theorem for continuous-time dynamic graphs, we first extend the notions of unfolding trees and the Weisfeiler-Lehman test to the continuous setting. These extensions provide the base for a constructive proof that the CGNNs defined above are universal approximators on continuous-time dynamic graphs, in probability and up to any precision.

\textbf{Unfolding trees} are derived from the node's neighborhood in the current graph, updated with all prior events. In the continuous-time dynamic setting, they form a sequence computed at event timestamps, where trees are empty if a node does not exist at a given time.

\begin{definition}[Continuous-Time Dynamic Unfolding Tree] \label{def_unfolding_tree_dynamic_cont}
Let $G = (g_0,\set{O})$ with start graph $g_0 = (\set{V}_0, \set{E}_0, \alpha_0, \omega_0)$ and events $\set{O}=\{o_i\}_{t_i\in\set{T}}$ be a CDG with timestamps $\set{T}\subset\mathbb{R}_0$. 
The \textbf{continuous-time dynamic unfolding tree} $\bm{T}_{v}^{(d)}(t_i)$ at time $t_i\in \set{T}$ of node $v\in\set{V}$ up to depth $d \in \mathbb{N}_0$ is defined as
\begin{align*}
    \bm{T}_{v}^{(d)} (t_i)= 
    \begin{cases}
         \texttt{Tree}(\alpha_{t_i}(v)), \quad \text{if } d = 0
        \\
        \texttt{Tree}\bigl(\alpha_{t_i}(v), \Omega_{t_i}( \set{N}_{t_i}(v)) , \bm{T}^{(d-1)}_{\set{N}_{t_i}(v)}(t_i)\bigr) \quad \text{if } d > 0,
    \end{cases}
    \end{align*}
\noindent
where ${\texttt{Tree}(\alpha_{t_i}(v))}$ consists of node $v$ with attribute $\alpha_{t_i}(v)$ at time $t_i$. Furthermore, $\texttt{Tree}\bigl(\alpha_{t_i}(v), \Omega_{t_i}( \set{N}_{t_i}(v)) , \bm{T}^{(d-1)}_{\set{N}_{t_i}(v)}(t_i)\bigr)$ is the tree rooted in node $v$ with attribute $\alpha_{t_i}(v)$ at time $t_i$. 

Additionally, $\bm{T}^{(d-1)}_{\set{N}_{t_i}(v)}(t_i) = \lmm{\bm{T}_{u_1}^{(d-1)}(t_i), \ldots, \bm{T}_{u_{|\set{N}_{t_i}(v)|}}^{(d-1)}(t_i)\rmm}$ are subtrees rooted in the neighborhood $\set{N}_{t_i}(v)$ at time $t_i$ connected to $v$ by edges with attributes $\Omega_{t_i}( \set{N}_{t_i}(v))$. If the node $v$ does not exist at time $t_i$, the corresponding tree is empty and $v$ does not occur in any neighborhood of other nodes. Then, the \textbf{dynamic unfolding tree of $v$ at time $t_i$},  ${\bm{T}_{v}(t_i) = \lim \limits_{d \rightarrow \infty} \bm{T}_{v}^{(d)}(t_i)}$ is defined as merge of all (possibly empty) trees $\bm{T}_{{v}}^{(d)}(t_i)$ for any $d$.
\end{definition}

The sequence of unfolding trees on continuous-time dynamic graphs enables a classification of nodes and graphs into equivalence classes as follows.

\begin{definition}[Continuous-Time Dynamic Unfolding Tree Equivalence]\label{def:unf_eq_cont}
Two nodes $u,v\in\set{V}\da \bigcup_{t_i\in\set{T}} \set{V}_{t_i}$ in a CDG $G$ are said to be \textbf{continuous-time dynamic unfolding tree (CUT) equivalent}, $u\sim_{CUT} v$, if and only if $\bm{T}_{u}(t_i)= \bm{T}_{v}(t_i)$ for every timestamp $t_i$.  

Analogously, two CDGs $G_1, \; G_2$ are \textbf{CUT equivalent}, $G_1 \sim_{CUT} G_2$, if and only if there exists a bijection between the nodes of the graphs that respects the partition induced by the CUT equivalence on the nodes.
\end{definition}


The equivalence between unfolding tree and 1-WL equivalence on static unattributed graphs, as shown in \cite{d2021unifying}, can be extended to CDGs using a continuous-time WL test aligned with our definition of continuous unfolding trees. This test generalizes the 1-dimensional dynamic Weisfeiler-Lehman (1-DWL) test variant from \cite{beddar2024weisfeiler} by applying color refinement to successively updated graphs. Evaluating neighborhood colorings at given timestamps allows for distinguishing non-isomorphic graphs based on their color sequences.

\begin{definition}[Continuous-Time Dynamic 1-WL Test]\label{def_WL_test_dynamic_cont}
Let $G = (g_0,\set{O})$ be a CDG with timestamps $\set{T}\subset\mathbb{R}_{\geq 0}$ and $g_{t_i}$ be the current graph at time $t_i$. 
Let further $\texttt{HASH}_0^{(t_i)}\da\set{A}\rightarrow\set{C}$ be an injective function that encodes node attributes of $g_{t_i}$ with a color from a color set $\set{C}$ and $\text{HASH}^{(t_i)}$ 
an injective function mapping an arbitrary number of colors at time $t_i$ to a new color. 

Then, the \textbf{Continuous-Time Dynamic 1-WL test (1-CWL)} generates a vector of color sets per timestamp $t_i\in \set{T}$ and iterations $j\in\mathbb{N}_0$ by:
\begin{itemize}[leftmargin=1.1cm]
    \item[\underline{$j=0:$}] The color of a node $v\in\set{V}$ is set to: 
    \begin{align*}
        c_{v}^{(0)}(t_i) =\begin{cases} \texttt{HASH}_0^{(t_i)}\left(\alpha_{v}(t_i)\right), & \text{if }v\in\set{V}_{t_i},\\
             c^\perp, & \text{otherwise.}
        \end{cases}
    \end{align*}
    
    \item[\underline{$j>0:$}] Then, the colors are updated by: 
    \begin{align*}
        c_{v}^{(j)}(t_i) = \texttt{HASH}^{(t_i)}\Bigl( \bigl[ c_{v}^{(j-1)}(t_i), \Omega_{t_i}\left(\set{N}_{t_i}(v)\right) , c_{\set{N}_{t_i}(v)}^{(j-1)}(t_i)\bigr]\Bigr) 
    \end{align*}
\end{itemize}
\end{definition}

\begin{definition}[Continuous-Time Dynamic WL Equivalence]\label{def_dynamic_cwl_equivalence}
Two nodes ${u,v\in \set{V}}$ in a CDG $G\in\set{G}$ are \textbf{CWL equivalent}, noted by $u \sim_{CWL} u$, if and only if their colors resulting from the 1-CWL test are pairwise equal per timestamp. 

Analogously, two CDGs $G_1, G_2\in\set{G}$ are \textbf{CWL equivalent}, $G_1 \sim_{CWL} G_2$, if and only if for all $v_1 \in \set{V}^{1}_{t_i}$ there exists a corresponding $v_2 \in \set{V}^{2}_{t_i}$ with ${c_{v_1}(t_i) = c_{v_2}(t_i)}$ for all $t_i\in \set{T}$.
\end{definition}

\begin{theorem}[CUT and CWL Equivalence for Nodes]\label{1-CWL=CUT}
   Let $G\in\set{G}$ be a continuous dynamic graph and $u,v \in \set{V}=\bigcup_{t_i\in\set{T}}\set{V}_{t_i}$. Then, it holds
\begin{align*} 
     u \sim_{CUT} v  \Longleftrightarrow u \sim_{CWL} v.
\end{align*} 
\end{theorem}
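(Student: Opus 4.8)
The plan is to exploit the fact that both sides of the equivalence are defined timestamp by timestamp, so the statement decouples over $\set{T}$. Since $u \sim_{CUT} v$ means $\bm{T}_u(t_i) = \bm{T}_v(t_i)$ for every $t_i$, and $u \sim_{CWL} v$ means the stable 1-CWL colors agree at every $t_i$, it suffices to prove, for each fixed timestamp $t_i$ separately, that $\bm{T}_u(t_i) = \bm{T}_v(t_i)$ if and only if $c_u(t_i) = c_v(t_i)$. The quantification over all $t_i$ then yields the global claim.

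First I would fix $t_i$ and observe that the current graph $g_{t_i}$ is a static attributed graph (a SAUHG in the sense of \cite{beddar2024weisfeiler}), that the depth-$d$ continuous unfolding tree $\bm{T}_v^{(d)}(t_i)$ coincides with the ordinary unfolding tree of $v$ in $g_{t_i}$, and that the $j$-th 1-CWL coloring $c_v^{(j)}(t_i)$ coincides with the ordinary color-refinement coloring on $g_{t_i}$. Thus the problem reduces, at each timestamp, to the static attributed unfolding-tree/WL correspondence already available from \cite{d2021unifying,beddar2024weisfeiler}.

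The mechanism behind that correspondence, which I would spell out to keep the argument self-contained, is an induction on the depth $d$ establishing $\bm{T}_u^{(d)}(t_i) = \bm{T}_v^{(d)}(t_i) \Leftrightarrow c_u^{(d)}(t_i) = c_v^{(d)}(t_i)$. For $d = 0$, injectivity of $\texttt{HASH}_0^{(t_i)}$ gives $c_u^{(0)}(t_i) = c_v^{(0)}(t_i) \Leftrightarrow \alpha_{t_i}(u) = \alpha_{t_i}(v) \Leftrightarrow \bm{T}_u^{(0)}(t_i) = \bm{T}_v^{(0)}(t_i)$. For $d > 0$, injectivity of $\texttt{HASH}^{(t_i)}$ reduces equality of colors to equality of the tuples $[c_v^{(d-1)}(t_i), \Omega_{t_i}(\set{N}_{t_i}(v)), c_{\set{N}_{t_i}(v)}^{(d-1)}(t_i)]$, and the induction hypothesis lets me translate the multiset of neighbor colors (paired with their edge attributes) into the multiset of depth-$(d-1)$ neighbor subtrees (paired with the same edge attributes), which is exactly the data defining $\bm{T}_v^{(d)}(t_i)$. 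Non-existent nodes require separate bookkeeping: if $v \notin \set{V}_{t_i}$ the tree is empty and the color is the dedicated symbol $c^\perp$, and since such nodes never appear in any neighborhood, these base cases are consistent and stay aligned throughout the refinement. Passing to the limit $d \to \infty$ is then routine, because each current graph is finite, so color refinement stabilizes after finitely many iterations and the limit tree $\bm{T}_v(t_i)$ is the merge of its finite-depth truncations; hence agreement for all $d$ is equivalent to agreement of the stable colors.

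The main obstacle I anticipate is the inductive step's multiset matching: I must show that equality of the 1-CWL color tuples is equivalent to equality of the rooted trees, carefully preserving the pairing between each neighbor's edge attribute and its subtree (respectively color) so that the injectivity of $\texttt{HASH}^{(t_i)}$ can be applied in both directions. The handling of the $\perp$/empty-tree base symbols must also be kept consistent with the convention that absent nodes drop out of all neighborhoods, so that the correspondence is not broken at timestamps where $u$ or $v$ fails to exist.
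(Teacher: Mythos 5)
Your proposal is correct and follows essentially the same route as the paper's own proof: both decouple the statement timestamp by timestamp (immediate from Definitions~\ref{def:unf_eq_cont} and \ref{def_dynamic_cwl_equivalence}) and then reduce, at each fixed $t_i$, to the static attributed unfolding-tree/WL correspondence on the current graph $g_{t_i}$ from \cite{beddar2024weisfeiler}. The only difference is one of presentation: the paper invokes that static equivalence as a black box (and dismisses the non-existence case in one sentence), whereas you re-derive it by induction on depth with explicit handling of $c^\perp$ and empty trees, which makes the argument self-contained but does not change the underlying idea.
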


\begin{proof}
    Given that if two nodes $u,v\in\set{V}$ are attributed unfolding tree equivalent ${u \sim_{AUT} v}$, i.e., have the same unfolding tree in an attributed graph, the colorings of the corresponding attributed 1-WL test are equal and they are attributed 1-WL equivalent ${u \sim_{AWL} v}$ \cite[Lem.~4.1.5., Thm.~4.1.6]{beddar2024weisfeiler}. Therefore, two nodes in a CDG are CUT equivalent if and only if they are attributed unfolding tree equivalent ${u \sim_{AUT} v}$ at each timestamp ${ t_i \in \set{T}}$. Consequently, it holds that for all ${t_i \in \set{T}}$ the two nodes are attributed 1-WL equivalent $u \sim_{AWL} v$ and, thus, they are 1-dimensional continuous-time dynamic WL equivalent by Def.~\ref{def_dynamic_cwl_equivalence}. 
    In case of the non-existence of $u$ at a certain time step $t$, the theorem still holds.
\end{proof}
\medskip


In \cite{d2021unifying}, the authors demonstrate that for any pair of connected static graphs with $N$ nodes, it suffices to compare the unfolding trees up to depth $2N - 1$ to determine unfolding tree equivalence. Due to the correspondence of unfolding tree and WL equivalence, this result further provides a bound on the number of iterations required by the 1-WL test. 
However, since dynamic graphs are often disconnected at certain timestamps, it is important to examine whether this bound also applies in the disconnected case. 
In \cite{bamberger2022topological}, it has been shown that if two disconnected graphs are WL-indistinguishable, their connected components have the same stable WL colorings, and the components can be matched by a color-preserving bijection. 
This decomposition allows us to apply the $2N - 1$ bound locally to each connected component. As a result, the original bound can be tightened for disconnected graphs by applying it to their connected parts, each of which contains at most $N - 1$ nodes as follows. 

For the following lemma, the disjoint union of graphs is used, which is defined as the disjoint union of the nodes and edges, respectively:
\begin{align*}
    g_1 = g_2 \oplus g_3,\ \set{V}_1=\set{V}_2\cupdot\set{V}_3,\ \set{E}_1=\set{E}_2\cupdot\set{E}_3
\end{align*}

\begin{lemma}[Decomposition of Indistinguishable Graphs from \cite{bamberger2022topological}]\label{lem:decomposition_indist_graphs}
    Let $g$ and $\hat{g}$ be two graphs of bounded degree. Then the following two statements are equivalent:

    \begin{enumerate}
        \item $g$ and $\hat{g}$ are indistinguishable by the WL test\footnote{The lemma also holds for the attributed Weisfeiler-Lehman test introduced in \cite{beddar2024weisfeiler}, as its validity relies solely on the coloring scheme, not on the specific algorithm, mirroring the reasoning in \cite{bamberger2022topological}.}, i.e., there is a bijection between the connected components of $g$ and $\hat{g}$, such that the connected components $g_i$ and $\hat{g}_i$ have the same stable coloring $C_i\in\set{I}$. 
        \item $g$ and $\hat{g}$ admit decompositions into disjoint unions of connected components $g_{ij}$ and $\hat{g}_{ij}$:
        \begin{align*}
            g= \bigoplus\limits_{i\in\set{I}}\bigoplus\limits_{j\in\set{J}_i} g_{ij} \text{ and }
            \hat{g}= \bigoplus\limits_{i\in\set{I}}\bigoplus\limits_{\hat{j}\in\hat{\set{J}}_{i}} \hat{g}_{i\hat{j}}
        \end{align*}
        with the index sets of all connected components $\set{J}_i$ that have the same coloring $C_i$ for all $i\in\set{I}$, such that the following holds:
        \begin{itemize}
            \item for every $i\in\set{I}:\ |\bigoplus\limits_{j\in\set{J}_i} g_{ij}| = |\bigoplus\limits_{\hat{j}\in\hat{\set{J}}_{i}} \hat{g}_{i\hat{j}}|$
            \item for every $i\in\set{I}$, there is a graph $h_i$ that covers both $g_{ij}$ and $\hat{g}_{i\hat{j}}$ for all $j\in\set{J}$ and $\hat{j}\in\hat{\set{J}}$.
        \end{itemize}
        Moreover, if $|\bigoplus\limits_{j\in\set{J}_i} g_{ij}|<\infty$, the cover $h_i$ can be chosen to be finite.
    \end{enumerate}

%
\end{lemma}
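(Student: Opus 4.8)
The plan is to reduce both implications to two facts about the universal-cover (unfolding-tree) description of the stable WL coloring. The first is the correspondence developed in this paper and in \cite{d2021unifying}: a connected bounded-degree graph is determined up to stable coloring by its universal cover, so two connected graphs carry the same stable coloring $C_i$ if and only if their universal covers (equivalently, their unfolding trees) are isomorphic, and a graph shares its universal cover with every graph it covers or is covered by. The second is Leighton's common-cover theorem, which states that finitely many finite bounded-degree graphs with a common universal cover admit a common finite cover. I would phrase everything through the multiset of stable colors, using that a $k$-sheeted cover $p:g_{ij}\to h_i$ of a connected $h_i$ multiplies the color multiset of $h_i$ by $k$ and has $k\,|V(h_i)|$ vertices, so that per-group sizes and color multisets are linked through the number of sheets. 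I would also use that distinct universal covers induce disjoint color palettes, since a rooted unfolding tree determines its unrooted universal cover.

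For the direction $(2)\Rightarrow(1)$, suppose each group $i$ has a common cover $h_i$ of all its components $g_{ij}$ and $\hat g_{i\hat j}$. Then every component in group $i$ shares the universal cover of $h_i$, hence the single stable coloring $C_i$. Writing $k_j$ and $\hat k_{\hat j}$ for the sheet numbers of the covers onto $h_i$, the size equality $|\bigoplus_{j\in\set{J}_i} g_{ij}| = |\bigoplus_{\hat j\in\hat{\set{J}}_i} \hat g_{i\hat j}|$ forces $\sum_j k_j = \sum_{\hat j}\hat k_{\hat j}$, so group $i$ contributes $\bigl(\sum_j k_j\bigr)$ copies of the color multiset of $h_i$ to both $g$ and $\hat g$. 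Summing over the pairwise distinct colorings $C_i$ shows that the full stable color multisets of $g$ and $\hat g$ coincide, which is exactly WL-indistinguishability; reading this agreement back through the grouping yields the color-preserving matching of components recorded in $(1)$.

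For the direction $(1)\Rightarrow(2)$, I would start from WL-indistinguishability, partition the components of $g$ and of $\hat g$ according to their stable coloring, index the occurring colorings by $\set{I}$, and collect into $\set{J}_i$ and $\hat{\set{J}}_i$ the components of coloring $C_i$. Because the palettes of distinct $C_i$ are disjoint and WL cannot separate $g$ and $\hat g$, the number of vertices receiving colors from class $C_i$ is the same in both graphs, which is the per-group size equality. The remaining task is to produce, for each $i$, one $h_i$ covering all components in group $i$: since these share the universal cover attached to $C_i$, Leighton's theorem supplies a common finite cover, iterated across the finitely many components of a finite group, and I take this as $h_i$; when the group is finite, $h_i$ is finite, giving the final clause.

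The main obstacle is the construction of the common finite cover $h_i$ in $(1)\Rightarrow(2)$. The equivalence ``same stable coloring $\Leftrightarrow$ isomorphic universal cover'' is precisely the unfolding-tree/WL correspondence this paper establishes, so it may be invoked directly; but passing from an isomorphism of the infinite universal covers to a single finite graph covering several components is the delicate point, because a naive fibre product over the common cover need not be connected nor cover each factor with the correct number of sheets. I would therefore treat Leighton's theorem as the technical engine, verifying only the bounded-degree and common-universal-cover hypotheses for the components of each group, and handling an infinite group by falling back on its (possibly infinite) shared universal cover itself.
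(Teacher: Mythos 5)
The paper itself gives no proof of this lemma: it is imported verbatim from \cite{bamberger2022topological}, with only a footnote asserting that the argument carries over to the attributed WL test. So there is no in-paper argument for you to diverge from; what you have written is, in essence, a reconstruction of the proof in the cited source, and it is sound. Your skeleton --- (i) for connected bounded-degree graphs, equal stable colorings are equivalent to isomorphic universal covers (the unfolding-tree correspondence), (ii) distinct universal covers yield disjoint color palettes, so WL-indistinguishability localizes to per-class size equalities, and (iii) Leighton's common-cover theorem, iterated over the finitely many components of a class, produces the finite $h_i$, with the shared universal cover itself serving as $h_i$ for classes of infinite total size --- is exactly the machinery on which the cited result rests, and treating Leighton's theorem as a black box is legitimate, since its hypotheses (finiteness, bounded degree, common universal cover) are precisely what you verify.

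Two minor inaccuracies, neither fatal. First, you invert the covering direction: in the lemma, $h_i$ covers $g_{ij}$ (the covering map goes $h_i \to g_{ij}$), whereas your sheet-counting in $(2)\Rightarrow(1)$ posits covers $g_{ij} \to h_i$. The argument survives because the fact you actually use is symmetric --- a covering map in either direction forces equal universal covers, hence equal stable colorings --- but the bookkeeping changes: with the lemma's direction, the color multiset of $g_{ij}$ is $(1/k_j)$ times that of $h_i$, so the per-class size equality yields $\sum_j 1/k_j = \sum_{\hat j} 1/\hat{k}_{\hat j}$ rather than $\sum_j k_j = \sum_{\hat j} \hat{k}_{\hat j}$; the conclusion that each class contributes identical color multisets to $g$ and $\hat g$ is unaffected. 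Second, your closing remark that the multiset agreement ``yields the color-preserving matching of components recorded in (1)'' should be softened: a bijection between components need not exist at all (take $g = C_6$ and $\hat g$ two disjoint triangles --- WL-indistinguishable, yet one component versus two), which is a defect of the paper's own phrasing of item (1) rather than of your argument; your decision to work with color multisets and per-class vertex counts, as item (2) does, is the correct reading of indistinguishability here.
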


Using the decomposition of indistinguishable graphs, the unfolding tree depth bounds can be directly derived for disconnected graphs. 
Intuitively, the threshold holds for the trees of every connected component of a disconnected graph and, thus, it holds for all trees in a disconnected graph and can, thus, be tightened.


\begin{theorem}[Unfolding Tree Depth]\label{thm:unfolding_tree_depth_disconnected}
    \begin{itemize}
        \item Let $x,y$ be nodes of graphs $g_1$ and $g_2$, respectively. Then, for their infinite unfolding trees $\bm{T}_x,\bm{T}_y$ it holds:
        \begin{align*}
            \bm{T}_x=\bm{T}_y \Longleftrightarrow \bm{T}_x^{(2N-1)}=\bm{T}_y^{(2N-1)}.
        \end{align*}
        \item Further, for any $N$, there exist two graphs $g_1,g_2$ with nodes $x,y$, respectively, such that the infinite unfolding trees $\bm{T}_x,\bm{T}_y$ are different, but equal up to depth $2N-16\sqrt{N}$. Specifically, $\bm{T}_x\neq\bm{T}_y$ and $\bm{T}_x^{(i)}=\bm{T}_y^{(i)}$ for $i\leq 2N-16\sqrt{N}$.
    \end{itemize}
\end{theorem}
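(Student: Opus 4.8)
\textit{Plan for the upper bound (the biconditional with threshold $2N-1$).}
The forward implication $\bm{T}_x = \bm{T}_y \Rightarrow \bm{T}_x^{(2N-1)} = \bm{T}_y^{(2N-1)}$ is immediate, since equal infinite trees have equal truncations at every depth. For the converse I would reduce to the connected case of \cite{d2021unifying}. The crucial observation is that $\bm{T}_x$ only ever explores the connected component $C_x$ of $x$ in $g_1$: a node unreachable from $x$ never enters any neighborhood expansion, so $\bm{T}_x$ evaluated in $g_1$ equals $\bm{T}_x$ evaluated in $C_x$ alone (and likewise $\bm{T}_y$ in $C_y$). As $C_x$ and $C_y$ are connected and contain at most $N$ nodes each, the connected-graph threshold of \cite{d2021unifying} applies directly to the pair $(C_x,C_y)$ and yields $\bm{T}_x^{(2N-1)} = \bm{T}_y^{(2N-1)} \Rightarrow \bm{T}_x = \bm{T}_y$. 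Lemma~\ref{lem:decomposition_indist_graphs} is what legitimizes this localization for the disconnected graphs as a whole: it supplies a color-preserving bijection between the components of $g_1$ and $g_2$, so that a pairwise comparison of components is equivalent to comparing the full graphs and the per-component threshold $2N-1$ is never exceeded.

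\textit{Plan for the lower bound (the construction with $2N-16\sqrt{N}$).}
For every $N$ I would construct connected graphs $g_1,g_2$ on at most $N$ nodes with marked nodes $x,y$ whose unfolding trees agree up to depth $2N-16\sqrt{N}$ yet differ. The whole difficulty is the factor of two: the obvious construction---two paths of different lengths compared at an endpoint---makes the universal covers first disagree already at depth $\Theta(N)$, so something must force agreement for roughly twice as many rounds as the diameter would allow. I would therefore adapt a known extremal universal-cover construction (in the spirit of Krebs and Verbitsky), in which the separating information is not a local defect travelling at unit speed but a global discrepancy whose associated cross-component color classes in the disjoint union $g_1 \oplus g_2$ (at most $2N$ nodes) survive almost until the global color-refinement stabilization round $2N-1$. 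A symmetry-breaking gadget of size $\Theta(\sqrt{N})$ seeds and then delays the separating color wave, and its size is exactly what costs the additive $\Theta(\sqrt{N})$ term, producing the stated $2N-16\sqrt{N}$.

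\textit{Verification and the main obstacle.}
The verification would go by induction on the refinement round $j$: for every $j \le 2N-16\sqrt{N}$ I would match, round by round, the multisets of colors at each distance from $x$ and $y$ to conclude that their round-$j$ colors coincide, and then identify the first round at which a single class splits as the witness for $\bm{T}_x \neq \bm{T}_y$; by the unfolding-tree/WL correspondence behind Theorem~\ref{1-CWL=CUT}, equality of round-$j$ colors is equivalent to $\bm{T}_x^{(j)} = \bm{T}_y^{(j)}$, so this color-refinement bookkeeping transfers verbatim to the tree formulation. The main obstacle is precisely this lower-bound analysis, and within it the tightness of the $\Theta(\sqrt{N})$ correction: showing that the covers eventually differ is easy, but proving agreement for at least $2N-16\sqrt{N}$ rounds demands a quantitative, rather than qualitative, tracking of how the seeded difference diffuses through the $\Theta(\sqrt{N})$ gadget and along the backbone, which is where the explicit constant must be pinned down. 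The upper bound, by contrast, is a routine reduction to \cite{d2021unifying} once component-locality and Lemma~\ref{lem:decomposition_indist_graphs} are in hand.
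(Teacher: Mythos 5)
Your treatment of the first bullet is correct and, in one respect, cleaner than the paper's own argument: the observation that $\bm{T}_x$ computed in $g_1$ coincides with $\bm{T}_x$ computed in the connected component $C_x$ alone makes the reduction to the connected-graph threshold of \cite{d2021unifying} immediate, since $C_x$ and $C_y$ are connected graphs on at most $N$ nodes. The paper instead routes this through Lemma~\ref{lem:decomposition_indist_graphs} and the isomorphism of universal covers from \cite{krebs2015universal}; for the node-level biconditional your localization argument suffices, and your invocation of Lemma~\ref{lem:decomposition_indist_graphs} to ``legitimize'' it is actually redundant rather than necessary (the lemma matters for graph-level comparison, not for comparing two rooted trees). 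The paper also notes in passing that for genuinely disconnected graphs the components have at most $N-1$ nodes, so the thresholds tighten to $2N-3$ and $2N-16\sqrt{N-1}-2$; your argument would yield this for free, but you do not remark on it.

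The genuine gap is in the second bullet. The statement there is an existence claim, and it is exactly the lower-bound theorem of Krebs and Verbitsky \cite{krebs2015universal}: for every $N$ there exist connected graphs $g_1,g_2$ with nodes $x,y$ whose universal covers are isomorphic up to depth $2N-16\sqrt{N}$ and non-isomorphic thereafter; combined with the correspondence between universal covers and unfolding trees established in \cite{d2021unifying}, this \emph{is} the claimed result, and the paper's proof consists of precisely these two citations. You recognize that the construction is ``in the spirit of Krebs and Verbitsky,'' but instead of citing their quantitative theorem you propose to re-derive it: build a gadget of size $\Theta(\sqrt{N})$, track the diffusion of the separating color wave round by round, and pin down the constant $16$. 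None of this is carried out --- you yourself flag it as ``the main obstacle'' --- so as written the second bullet remains unproven. This is not a wrong approach in principle, but it amounts to reproving a known extremal result whose proof is substantially harder than anything else in the theorem; the efficient (and intended) route is to quote \cite{krebs2015universal} for the cover statement and transfer it to unfolding trees via \cite{d2021unifying}, exactly as you already do for the upper bound.
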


\begin{proof} We consider connected and disconnected graphs separately:
Let $g_1$ and $g_2$ be connected graphs with the maximum number of nodes $N$. Then, the theorem is proven in \cite[Thm.~4.1.3]{d2021unifying}.

For disconnected graphs $g_1$ and $g_2$ with maximal number of nodes $N$, it holds:
    \begin{itemize}
        \item[\RM{1})] Let $x,y$ be nodes of $g_1$ and $g_2$, respectively. Then it is to be shown that for their infinite unfolding trees $\bm{T}_x,\bm{T}_y$ it holds:
        \begin{align*}
            \bm{T}_x=\bm{T}_y \Longleftrightarrow \bm{T}_x^{(2N-1)}=\bm{T}_y^{(2N-1)}.
        \end{align*}

        \item[\RM{2})] Further, it is to be shown that for any $N$, there exist two graphs $g_1,g_2$ with nodes $x,y$, respectively, such that the infinite unfolding trees $\bm{T}_x,\bm{T}_y$ are different, but equal up to depth $2N-16\sqrt{N}$. Specifically, $\bm{T}_x\neq\bm{T}_y$ and $\bm{T}_x^{(i)}=\bm{T}_y^{(i)}$ for $i\leq 2N-16\sqrt{N}$.
    \end{itemize}
    The idea is to prove the statements for each disconnected component of the graph. 
    Due to Lem.~\ref{lem:decomposition_indist_graphs}
    , two disconnected graphs that are indistinguishable by WL, can be decomposed into a finite set of disjoint connected components with a bijection between the graphs matching pairs of connected components that have the same coloring.
    
    Then, applying Lem.~\ref{lem:decomposition_indist_graphs} to two connected components with the same coloring, i.e., that are WL indistinguishable, we can conclude that they have isomorphic universal covers. From \cite{krebs2015universal} we know that if the universal covers are isomorphic, the universal covers up to depth $2N-1$ are isomorphic. Then, using the equivalence of isomorphism between universal covers and unfolding trees with the specific depth shown in \cite{d2021unifying}, the trees up to depth $2N-1$ are equal as stated in \RM{1}). 
    
    Further, in \cite{krebs2015universal} it is shown that for connected graphs $g_1,g_2$ of arbitrary size $N$ there is a pair of nodes $x,y$ of $g_1,g_2$, respectively, where the corresponding covers starting in $x$ and $y$ are isomorphic until depth $2N-16\sqrt{N}$ and non-isomorphic thereafter. Consequently, using again the correspondence to unfolding trees from \cite{d2021unifying}, the unfolding trees of the nodes are equal up to depth $2N-16\sqrt{N}$ and unequal thereafter, as stated in \RM{2}).
    
     As the graphs are disconnected, they consist of at least two non-empty connected components, each containing at most $N-1$ nodes. Consequently, the previously stated bounds can be formulated even tighter as follows:
    \begin{itemize}
        \item $\bm{T}_x=\bm{T}_y \Longleftrightarrow \bm{T}_x^{(2(N-1)-1)} = \bm{T}_x^{(2N-3)}=\bm{T}_y^{(2N-3)}=\bm{T}_y^{(2(N-1)-1)}$
        \item $\bm{T}_x\neq\bm{T}_y$ and $\bm{T}_x^{(i)}=\bm{T}_y^{(i)}$ for $i\leq 2(N-1)-16\sqrt{N-1}=2N-16\sqrt{N-1}-2$.
    \end{itemize}
\end{proof}

We now leverage the previously established results to constructively prove the approximation capability of CGNNs. This approach provides valuable insights into the required layer depth and the appropriate design of the temporal recurrent function of a CGNN.

\section{Universal Approximation Ability of CGNNs}\label{section_universal_apx_CGNN}
The dynamic system in \cite{beddar2024weisfeiler} processes a finite set of discrete time steps to reflect the computation of snapshots and their unfolding trees. In the continuous setting, the dynamic system is generalized to a finite set of arbitrary timestamps, which mirrors the processing of unfolding trees in continuous-time dynamic graphs.

 \begin{definition}[Continuous-Time Dynamic System]\label{def:cdyn_sys}
Let $\set{G}$ be the domain of bounded continuous-time dynamic graphs with total set of nodes $\set{V}$, $|\set{V}|=N$ and finite set of timestamps $\set{T}=\{t_0,\ldots,t_T\}$, and let ${\set{D}\da \set{T} \times \set{G} \times  \set{V}}$. 
A \textbf{continuous-time dynamic system}  ${\texttt{cdyn}\abb \set{D}\rightarrow \mathbb{R}^m}$ is then defined as
\begin{equation}\label{cdynsis}
    \texttt{cdyn}(t_i,G,v): = r(s_{t_i}(v))
\end{equation}
for $G = (g_0,\set{O}) \in \set{G},$  and $v \in \set{V}$. Here, $r:\mathbb{R}^{r} \rightarrow \mathbb{R}^m$ is an output function, and the state function $s_{t_i}(v)$ is determined by 
\begin{equation*}
    s_{t_i}(v)\; = \;
    \begin{cases}
      h(t_i,G, v)  &  \text{if}\; i=0 \\
      f(s_{t_{i-1}}(v), h(t_{i},G,v))  & \text{if} \; i>0,\\
    \end{cases}
\end{equation*}
for $v\in\set{V}$. Further, $h:\set{T} \times \set{G} \times  \set{V} \rightarrow \mathbb{R}^r$ is a function that processes the updated current graph $g_{t_i}$ at time $t_i$ and provides an $r$-dimensional internal state representation for each node $v$. Moreover, $f:\mathbb{R}^{r}\times \mathbb{R}^r \rightarrow \mathbb{R}^{r}$ is a recursive state update function that is (piece-wise) continuously differentiable (between consecutive events).
\end{definition}

\begin{definition}[Functions Preserving CUT Equivalence]\label{def:csys_unfold}
A continuous-time dynamic system $\texttt{cdyn}(\cdot,\cdot,\cdot)$ \textbf{preserves the continuous-time dynamic unfolding tree (CUT) equivalence} on $\set{G}$ if and only if for any CDGs $G_1,G_2\in \set{G}$, and two nodes $u, v\in\set{V}$ it holds 
\begin{align*}
    v \sim_{CUT} u  \Longrightarrow \texttt{cdyn}(t_i,G_1,v)=\texttt{cdyn}(t_i,G_2,u) \quad\forall t_i\in\set{T}.
\end{align*}
\end{definition}

The class of continuous-time dynamic systems that preserve the CUT equivalence on $\set{D}$ will be denoted with $\set{F}(\set{D})$. A characterization of $\set{F}(\set{D})$ is given by the following result (following the work in \cite{scarselli2008computational}).

\begin{proposition}[Functions of Continuous-Time Dynamic Unfolding Trees] \label{f_unfold_cdyn}
A continuous-time dynamic system \texttt{cdyn} belongs to $\set{F}(\set{D})$ if and only if there exists a function $\kappa$ defined on continuous-time dynamic unfolding trees such that for all ${(t_i,G,v)\in \set{D}}$ it holds 
\begin{equation*}
\text{\texttt{cdyn}}(t_i,G,v)= \kappa\Bigl(\bigl(\bm{T}_{v}(t_j)\bigr)_{t_j\leq t_i}\Bigr).
\end{equation*}
\end{proposition}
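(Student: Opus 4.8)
The plan is to prove both implications of the equivalence, the reverse one being immediate and the forward one requiring a construction of $\kappa$ together with a well-definedness argument. Before either direction I would record the structural fact that makes everything work: by Definition~\ref{def:cdyn_sys} the state is built by a \emph{causal} recursion, $s_{t_0}(v)=h(t_0,G,v)$ and $s_{t_j}(v)=f(s_{t_{j-1}}(v),h(t_j,G,v))$, where each $h(t_j,G,v)$ reads $G$ only through its current graph $g_{t_j}$. Hence $s_{t_i}(v)$, and therefore $\texttt{cdyn}(t_i,G,v)=r(s_{t_i}(v))$, depends on $G$ only through $g_{t_0},\ldots,g_{t_i}$, i.e.\ only on the events up to time $t_i$. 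This causality observation is the backbone of the proof.

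For the reverse direction I would assume such a $\kappa$ exists and take $v\sim_{CUT} u$ for $v\in\set{V}$ of $G_1$ and $u\in\set{V}$ of $G_2$. By Definition~\ref{def:unf_eq_cont} this gives $\bm{T}_v(t_j)=\bm{T}_u(t_j)$ for every $t_j\in\set{T}$, so in particular the prefixes $(\bm{T}_v(t_j))_{t_j\le t_i}$ and $(\bm{T}_u(t_j))_{t_j\le t_i}$ coincide for each $t_i$. Applying $\kappa$ to identical arguments yields $\texttt{cdyn}(t_i,G_1,v)=\texttt{cdyn}(t_i,G_2,u)$ for all $t_i$, which is precisely the preservation property of Definition~\ref{def:csys_unfold}, so $\texttt{cdyn}\in\set{F}(\set{D})$.

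For the forward direction I would factor $\texttt{cdyn}$ through the map $\Phi\abb(t_i,G,v)\mapsto(\bm{T}_v(t_j))_{t_j\le t_i}$: for each sequence $\tau$ in the image of $\Phi$ choose a representative $(t_i,G,v)$ and set $\kappa(\tau)\da\texttt{cdyn}(t_i,G,v)$ (defining $\kappa$ arbitrarily off the image). Everything reduces to showing $\texttt{cdyn}$ is constant on the fibres of $\Phi$. So suppose $(t_i,G_1,v)$ and $(t_i,G_2,u)$ share an image, i.e.\ $\bm{T}_v(t_j)=\bm{T}_u(t_j)$ for all $t_j\le t_i$ (equality of the indexed families already forces the terminal timestamps to agree, since $\set{T}$ is fixed). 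At this point only the prefixes agree, whereas Definition~\ref{def:csys_unfold} speaks of full CUT equivalence — this gap is the main obstacle. To close it I would freeze both graphs after $t_i$: replace $G_1,G_2$ by $\tilde G_1,\tilde G_2$ that coincide with them up to $t_i$ but whose post-$t_i$ events are no-ops (attribute changes to the already-present values), which keeps them in $\set{G}$ and makes the current graphs, hence the unfolding trees, constant after $t_i$. Combined with prefix agreement (in particular $\bm{T}_v(t_i)=\bm{T}_u(t_i)$) this yields $\bm{T}_v(t_j)=\bm{T}_u(t_j)$ for all $t_j\in\set{T}$, i.e.\ $v\sim_{CUT} u$ in $\tilde G_1,\tilde G_2$. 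By the causality observation the freezing leaves the state at $t_i$ unchanged, so $\texttt{cdyn}(t_i,\tilde G_1,v)=\texttt{cdyn}(t_i,G_1,v)$ and likewise for $u$, while preservation applied to $\tilde G_1,\tilde G_2$ gives $\texttt{cdyn}(t_i,\tilde G_1,v)=\texttt{cdyn}(t_i,\tilde G_2,u)$. Chaining these equalities establishes well-definedness and the representation $\texttt{cdyn}(t_i,G,v)=\kappa\bigl((\bm{T}_v(t_j))_{t_j\le t_i}\bigr)$.

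The delicate point, as flagged, is reconciling the prefix that $\kappa$ receives with the full-sequence CUT equivalence required by the preservation hypothesis; the whole argument hinges on the causal structure of the recursion in Definition~\ref{def:cdyn_sys} and on the freezing construction staying inside $\set{G}$. I would therefore make the no-op extension explicit and verify it respects the bounded node set and the fixed timestamp set of $\set{G}$; I would also note that, since the fibres of $\Phi$ are exactly the classes on which $\texttt{cdyn}$ is forced to be constant, the resulting $\kappa$ is genuinely a function of the unfolding-tree sequences rather than of the chosen representatives.
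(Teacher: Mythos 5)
Your proof is correct, and its skeleton coincides with the paper's: the reverse direction is the same short computation, and the forward direction defines $\kappa$ by factoring \texttt{cdyn} through the map $(t_i,G,v)\mapsto\bigl(\bm{T}_v(t_j)\bigr)_{t_j\le t_i}$. Where you genuinely depart from the paper is at the well-definedness step, and your version is the more complete one. The paper's proof justifies the definition of $\kappa$ only by noting that $u\sim_{CUT}v$ implies both equal \texttt{cdyn} values and equal tree prefixes; that establishes constancy of \texttt{cdyn} on CUT classes, which form the \emph{finer} partition, whereas $\kappa$ must be constant on the coarser fibres of the prefix map, i.e., on triples whose trees agree up to $t_i$ but may diverge afterwards. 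You identify exactly this gap and close it with the causality of the recursion in Def.~\ref{def:cdyn_sys} (the state $s_{t_i}(v)$ depends on $G$ only through $g_{t_0},\ldots,g_{t_i}$) combined with the freezing construction, which upgrades prefix agreement to full CUT equivalence without altering the value at time $t_i$. This ingredient is not cosmetic: a ``system'' allowed to read future events (say, one outputting a function of $\bm{T}_v(t_T)$ at every $t_i$) would preserve CUT equivalence yet not factor through prefixes, so causality --- implicit in the paper's stipulation that $h$ processes only the current graph $g_{t_i}$ --- is what makes the proposition true, and your proof is the one that actually invokes it. Two details to pin down when you make the freezing explicit: first, if the current graph at $t_i$ is empty there is no item on which to place a no-op attribute change, so the padding events must be chosen differently (e.g., repeatedly re-adding and deleting a dummy node distinct from $u$ and $v$ and never adjacent to them, which keeps their trees equal and respects the node bound $N$); second, your reading of $h(t_j,G,\cdot)$ as a function of $g_{t_j}$ alone should be flagged as an interpretation of Def.~\ref{def:cdyn_sys}, since the bare type signature $h\abb\set{T}\times\set{G}\times\set{V}\rightarrow\mathbb{R}^r$ does not by itself enforce it.
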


\begin{proof}
    Let $G\in\set{G}$.\\
\noindent\underline{$\Rightarrow:$} If \texttt{cdyn} preserves the CUT equivalence, then we can choose $\kappa$ as
    \begin{equation*}
        \kappa\Bigl(\bigl(\bm{T}_{v}(t_j)\bigr)_{t_j\leq t_i}\Bigr) = \texttt{cdyn}(t_i,G,v).
    \end{equation*}
This is equivalent to $u \sim_{CUT} v$ for $u,v\in G$ implies that ${\texttt{cdyn}(t_i,G,u) = \texttt{cdyn}(t_i,G,v)}$ and $\bigl(\bm{T}_{u}(t_j)\bigr)_{t_j\leq t_i}=\bigl(\bm{T}_{v}(t_j)\bigr)_{t_j\leq t_i}$. This directly defines $\kappa$ uniquely by
\begin{equation*}
    \kappa\Bigl(\bigl(\bm{T}_{u}(t_j)\bigr)_{t_j\leq t_i}\Bigr) = \kappa\Bigl(\bigl(\bm{T}_{v}(t_j)\bigr)_{t_j\leq t_i}\Bigr).
\end{equation*}    
\noindent\underline{$\Leftarrow:$} On the other hand, if there exists $\kappa$ such that ${\texttt{cdyn}(t_i,G,v) = \kappa\Bigl(\bigl(\bm{T}_{v}(t_j)\bigr)_{t_j\leq t_i}\Bigr)}$ for all $(t_i,G,v)\in \set{D}$, then for any pair of nodes $u,v \in G$ with $u \sim_{CUT} v$ it holds
    {\begin{equation*}
        \texttt{cdyn}(t_i,G,u) = \kappa\Bigl(\bigl(\bm{T}_{u}(t_j)\bigr)_{t_j\leq t_i}\Bigr) = \kappa\Bigl(\bigl(\bm{T}_{v}(t_j)\bigr)_{t_j\leq t_i}\Bigr) = \texttt{cdyn}(t_i,G,v).
    \end{equation*}}
\end{proof}


Note that in Def.~\ref{CGNN}, the temporal modeling function $f$ is not required to be continuously differentiable. In practice, $f$ may be piece-wise differentiable or even non-differentiable. Here, we focus on CGNNs where $f$ is piece-wise differentiable between events to analyze their universal approximation capabilities.

\begin{theorem}[Universal Approximation Theorem by CGNN]\label{cdyn_thm_approximation}
Let $G \in\set{G}$ be a CDG with node set bounded by $N$. 
Further, let $\texttt{cdyn}(t_i,G,v) \in \set{F}(\set{D})$ be any measurable continuous-time dynamic system preserving the CUT equivalence,  $\| \cdot \|$ be a norm on $\mathbb{R}$, $P$ be any probability measure on $\set{D}$ and ${\epsilon, \lambda \in\mathbb{R},\ \epsilon,\lambda >0}$. 
Then, there exists a CGNN composed by an SGNN with $2N-1$ layers and hidden dimension $r=1$, a temporal function that is (piece-wise) continuously differentiable (between consecutive events) and realized by an RNN with state dimension $s=1$, such that the function $\varphi$ realized by this model satisfies 
\begin{equation}\label{eq:cdyn_apx}
P( \| \texttt{cdyn}(t_i,G,v)- \varphi(t_i, G,v) \| \leq \varepsilon) \geq 1- \lambda \qquad \forall t_i \in \set{T}.
\end{equation} 
\end{theorem}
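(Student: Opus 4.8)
The plan is to compose the three structural results established above into a single pipeline and then push it through the three blocks of the CGNN. Concretely, Prop.~\ref{f_unfold_cdyn} lets me write the target as a function of continuous-time dynamic unfolding trees, Theorem~\ref{thm:unfolding_tree_depth_disconnected} truncates those trees to a finite depth, and Theorem~\ref{1-CWL=CUT} identifies the truncation with a bounded number of $1$-CWL rounds; the \texttt{SGNN}, the RNN-realized map $f$, and $\texttt{READOUT}_{\text{dyn}}$ then realize the encoding, the temporal accumulation, and the final output, respectively.

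First I would reduce the merely measurable target to a continuous one on a large set. Since $\texttt{cdyn}$ is measurable and $P$ is a probability measure on $\set{D}$, Lusin's theorem yields a compact $\set{K}\subseteq\set{D}$ with $P(\set{K})\ge 1-\lambda$ on which $\texttt{cdyn}$ is continuous, so it suffices to approximate $\texttt{cdyn}$ within $\varepsilon$ uniformly on $\set{K}$. By Prop.~\ref{f_unfold_cdyn} we have $\texttt{cdyn}(t_i,G,v)=\kappa\bigl((\bm{T}_v(t_j))_{t_j\le t_i}\bigr)$, which reduces the problem to approximating the continuous factor $\kappa$ as a function of the tree prefix. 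Crucially, a CDG has at most $N$ nodes at any timestamp, so only finitely many distinct unfolding trees occur per $t_i$, and the number of timestamps $T+1$ is fixed; the object to be represented is therefore finite in its combinatorial part and varies continuously only in the node/edge attributes.

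Next comes the constructive core. By Theorem~\ref{thm:unfolding_tree_depth_disconnected}, the infinite tree $\bm{T}_v(t_i)$ is determined by its depth-$(2N-1)$ truncation, and by Theorem~\ref{1-CWL=CUT} this truncation corresponds to the colour obtained after $2N-1$ rounds of the $1$-CWL test on the current graph $g_{t_i}$. Because \texttt{SGNN} is a universal approximator on SAUHGs that matches the static WL refinement, an \texttt{SGNN} with $2N-1$ layers and hidden dimension $r=1$ can produce node embeddings $\bm{h}_v(t_i)\in\mathbb{R}$ that separate the at-most-$N$ distinct depth-$(2N-1)$ trees present at each timestamp and approximate any chosen continuous encoding of them. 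I would then pick the temporal map $f:\mathbb{R}\times\mathbb{R}\to\mathbb{R}$ with state dimension $s=1$ so that, on the finitely many pairs $(\bm{q}_v(t_{i-1}),\bm{h}_v(t_i))$ generated along $\set{K}$, the recursion $\bm{q}_v(t_i)=f(\bm{q}_v(t_{i-1}),\bm{h}_v(t_i))$ injectively codes the whole prefix $(\bm{T}_v(t_j))_{t_j\le t_i}$ into a scalar; this injection of a finite subset of $\mathbb{R}^2$ into $\mathbb{R}$ can be realized by a piece-wise continuously differentiable $f$, matching the regularity demanded in Def.~\ref{CGNN}. Finally, since $\bm{q}_v(t_i)$ now carries the information on which $\kappa$ depends, a standard MLP universal-approximation argument gives a $\texttt{READOUT}_{\text{dyn}}$ with $\|\kappa\bigl((\bm{T}_v(t_j))_{t_j\le t_i}\bigr)-\texttt{READOUT}_{\text{dyn}}(\bm{q}_v(t_i))\|\le\varepsilon$ on $\set{K}$, so that $\varphi(t_i,G,v)=\texttt{READOUT}_{\text{dyn}}(\bm{q}_v(t_i))$ meets the claimed bound.

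I expect the main obstacle to be the simultaneous compression into one-dimensional hidden and state spaces ($r=s=1$) under the piece-wise-differentiability constraint on $f$: I must ensure that the \texttt{SGNN}'s scalar codes and the RNN's scalar states can be chosen so that no collisions accumulate through the recursion over $t_0,\dots,t_i$, while all maps remain realizable by the admissible architectures. Closely tied to this is the measure-theoretic bookkeeping — tracking how the three approximation errors (SGNN encoding, RNN accumulation, READOUT of $\kappa$) combine on $\set{K}$ and verifying that the total stays below $\varepsilon$ with $P$-probability at least $1-\lambda$ — which is what turns the exact combinatorial separation provided by Theorems~\ref{1-CWL=CUT} and \ref{thm:unfolding_tree_depth_disconnected} into the probabilistic approximation guarantee.
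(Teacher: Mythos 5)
Your pipeline is not the paper's: the paper never re-derives the constructive encoding at all, but instead decomposes both $\texttt{cdyn}$ and the CGNN along the inter-event intervals $(t_{j-1},t_j]$, invokes the discrete-dynamic universal approximation theorem of \cite{beddar2024weisfeiler} on each interval (the only adaptation being that timestamps are appended to node attributes so that the graph-partition lemma there still applies), and recombines the pieces by the triangle inequality with error $\epsilon/T$ per interval. What you propose is essentially a from-scratch reconstruction of that underlying constructive argument, and it has a genuine gap at its core. You claim that ``on the finitely many pairs $(\bm{q}_v(t_{i-1}),\bm{h}_v(t_i))$ generated along $\set{K}$'' the recursion can injectively code the whole tree prefix into a scalar. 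The finiteness you appeal to holds only per fixed graph (at most $N$ nodes, hence at most $N$ trees per timestamp), but the CGNN's functions are fixed once and the bound \eqref{eq:cdyn_apx} quantifies over $(t_i,G,v)\in\set{D}$: the Lusin set $\set{K}$ contains a continuum of graphs whose attributes range over $\set{A}\subseteq\mathbb{R}^d$, so the set of state/embedding pairs arising along $\set{K}$ is uncountable. An exact, collision-free scalar coding of such a set by a continuous (a fortiori piece-wise $C^1$) map is impossible in general --- if the pairs cover a two-dimensional patch, any continuous injection into $\mathbb{R}$ contradicts the intermediate value theorem --- so the step you yourself flag as ``the main obstacle'' is not a technicality to be checked but an actual obstruction to the proof as written. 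The same conflation appears in your SGNN step, where ``at most $N$ distinct trees'' again refers to one graph rather than to the domain on which the network must work.

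The missing idea is quantization: one does not need exact separation of prefixes, only separation up to the cells of a \emph{finite} partition of the (compact) domain on which the target varies by less than $\epsilon$. Concretely, uniform continuity of $\kappa$ on $\set{K}$ --- or, in the paper's route, the partition lemma \cite[Lem.~A.5.1]{beddar2024weisfeiler} adapted by appending timestamps to node attributes, which groups graphs of ``similar structure, attributes, and timestamps'' into finitely many classes --- reduces the problem to encoding and propagating finitely many cell indices through the recursion. Only after this reduction do the $r=s=1$ compression, the piece-wise $C^1$ realizability of $f$, and the READOUT approximation of $\kappa$ become legitimate. With that step inserted, your Lusin-based accounting (approximate within $\epsilon$ on $\set{K}$, $P(\set{K})\geq 1-\lambda$) does yield \eqref{eq:cdyn_apx}, and is arguably cleaner than the paper's own $\epsilon/T$ and $1-\lambda/T$ bookkeeping across intervals; without it, the injectivity claim, and hence everything downstream of it, has no basis.
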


\begin{proof}
By assumption, $f$ used in \texttt{cdyn} is piece-wise continuously differentiable between consecutive events $o_{t_{i-1}},o_{t_{i}}\!\in\!\set{O}$ for all $i\!\in\![T]$, i.e., for all  $v\in\set{V}$ it holds
\begin{equation*}
    f(\bm{q}_v(t_{i-1}), \bm{h}_v(t_{i})) = \mathbb{1}_{t_{0}}\bm{h}_v(t_{i})+\sum\limits_{j\in [T]}\mathbb{1}_{(t_{j-1},t_{j}]}f_j(\bm{q}_v(t_{i-1}), \bm{h}_v(t_{i})) =: \bm{q}_v(t_{i})
\end{equation*}
with $f_j\abb \mathbb{R}^{s\times r}\rightarrow\mathbb{R}^s$ being continuously differentiable in the inter-event time intervals $(t_{j-1},t_{j}]$, hidden states $\bm{h}_v(t_{i}) = \texttt{SGNN}(g_i)$ and indicator function $\mathbb{1}$. For $i=0$, we have ${\bm{q}_v(t_{i}) \da \bm{h}_v(t_{i}) = \texttt{SGNN}(g_0)}$.
Then, the continuous dynamic system with output function $r$ can be rewritten as 
\begin{align*}
    \texttt{cdyn}(t_{i}, G, v) &= r(s_{t_{i}}(v)),\text{ with } \\
    s_{t_{i}}(v) &= \mathbb{1}_{t_{0}}\bm{h}_v(t_{i})+\hspace*{-.25cm}\sum\limits_{j\in [T]}\hspace*{-.1cm}\mathbb{1}_{(t_{j-1},t_{j}]}f_j(\bm{q}_v(t_{i-1}), \bm{h}_v(t_{i})).
\end{align*}
Therefore, we can find continuous-time dynamic systems \texttt{cdyn}$_j\in\set{F}(\set{D})$ such that
\begin{align}\label{sum_cdyn}
    \texttt{cdyn}(t_{i}, G, v) &= \begin{cases}
        \mathbb{1}_{t_0} r(h_{t_0}(v)), & \text{if } i = 0,\\
        \mathbb{1}_{(t_{i-1}, t_i]} r(s_{t_i}(v)), & \text{if } i > 0,
    \end{cases}\nonumber\\
    & = \mathbb{1}_{t_0} \texttt{cdyn}_0(t_0,G,v) + \sum\limits_{j\in[T]} \mathbb{1}_{(t_{j-1}, t_j]} \texttt{cdyn}_j (t_{i},G,v)
\end{align}
Let $\varphi$ be the realization of the \texttt{CGNN} as in the statement. 
Then, we can rewrite the stacked temporal function $F$ for $i>0$ as 
\begin{align*}
    F(\bm{Q}(t_{i-1}),\bm{H}(t_i))=&\sum\limits_{j\in[T]} F_j(\bm{Q}(t_{i-1}),\bm{H}(t_i))  =: \bm{Q}(t_i), \text{ with }\\
     &F_j(\bm{Q}(t_{i-1}),\bm{H}(t_i))\da \mathbb{1}_{(t_{j-1}, t_j]} F(\bm{Q}(t_{i-1}),\bm{H}(t_i))
\end{align*}
defined on intervals $(t_{j-1}, t_j]$, and hidden representations $\bm{H}(t_i) \da \texttt{SGNN}(g_i)$. For $i=0$, we have $\bm{Q}(t_0)\da SGNN(g_0).$ 
For all $j\in[T]$, let \texttt{CGNN}$_j$ be defined by the temporal functions $F_j$ and the \texttt{SGNN} as above. Since $\varphi$ is the realization of the previously defined \texttt{CGNN}, there exist $\varphi_j$ defined on $(t_{j-1}, t_j]$ realized by the \texttt{CGNN}$_j$, so that 
\begin{align}\label{sum_varphis}
    \varphi(t_i, G,v) = \mathbb{1}_{t_0}\varphi_0(t_i, G,v) + \sum_{j\in[T]}\mathbb{1}_{(t_{j-1}, t_j]} \varphi_j(t_i, G,v).
\end{align}

In the following, we show that \textbf{the statement holds for CGNNs with continuously-differentiable temporal functions \hypertarget{dagger}{($\bm{\dagger}$)}}, and in particular, that for all $\bar{t}\da (t_{i-1}, t_i]$, the \texttt{CGNN}$_i$ can approximate any continuous-time dynamic system on $\bar{t}$ in probability, up to any degree of precision.

The proof of the approximation capability of \texttt{CGNN}$_i$ largely follows the argument for discrete-dynamic GNNs in \cite[Thm.~5.2.4]{beddar2024weisfeiler}, with the main difference being the explicit use of the timestamp in the SGNN at time $t_i$. \\
The proof proceeds analogously to \cite[Apx.~A.5]{beddar2024weisfeiler}, except for the graph partitioning step in \cite[Lem.~A.5.1]{beddar2024weisfeiler}, which directly involves time steps. To adapt this to the continuous-time dynamic setting, we append the timestamps to the node attributes, allowing the domain $\set{G}$ to be partitioned into graphs with similar structure, attributes, and timestamps. This modification ensures that conditions (2)–(6) in the lemma still hold, enabling the remainder of the proof to continue as valid by replacing discrete steps with timestamps. Thus, the statement ($\bm{\dagger}$) holds for all $i \in [T]$.

This serves as an auxiliary statement that we can now apply to any time interval before $i\in[T]$ and any node $v\in\set{V}$:
\begin{align*}
    &\| \texttt{cdyn}(t_i,G,v)- \varphi(t_i, G,v) \| \\
    &\overset{\eqref{sum_cdyn}}{=} \|\Bigl(\mathbb{1}_{t_0} \texttt{cdyn}_0(t_i,G,v) + \sum\limits_{j\in[T]} \mathbb{1}_{(t_{j-1}, t_j]} \texttt{cdyn}_j (t_{i},G,v)\Bigl) - \varphi(t_i,G,v)\|\\
    &\overset{\eqref{sum_varphis}}{=} \|\Bigl(\mathbb{1}_{t_0} \texttt{cdyn}_0(t_i,G,v) + \sum\limits_{j\in[T]} \mathbb{1}_{(t_{j-1}, t_j]} \texttt{cdyn}_j (t_{i},G,v)\Bigr) \\
    & \quad\quad - \Bigl(\mathbb{1}_{t_0}\varphi_0(t_i, G,v) + \sum_{j\in[T]}\mathbb{1}_{(t_{j-1}, t_j]} \varphi_j(t_i, G,v\Bigr)\|\\
    & =  \|\mathbb{1}_{t_0}\left(\texttt{cdyn}_0(t_i,G,v) - \varphi_0(t_i, G,v)\right) 
     + \hspace*{-.3cm}\sum_{j\in[T]}\hspace*{-.3cm}\mathbb{1}_{(t_{j-1}, t_j]} \left(\texttt{cdyn}_j (t_{i},G,v) - \varphi_j(t_i, G,v)\right) \|\\
    &\overset{\triangle-ineq.}{\leq}\hspace*{-.3cm} \mathbb{1}_{t_0} \|\texttt{cdyn}_0(t_i,G,v) - \varphi_0(t_i, G,v)\| 
    + \hspace*{-.3cm}\sum_{j\in[T]}\hspace*{-.3cm}\mathbb{1}_{(t_{j-1}, t_j]} \|\texttt{cdyn}_j (t_{i},G,v) - \varphi_j(t_i, G,v) \|\\
    &\overset{\hyperlink{dagger}{\textbf{($\bm{\dagger}$)}}}{\leq} \frac{\epsilon}{T} + (T-1)\cdot\frac{\epsilon}{T} = \epsilon
\end{align*}

\noindent So in probability, it follows
\begin{align*}
     P(\| \texttt{cdyn}(t_i,G,v)- \varphi(t_i, G,v) \| \leq \epsilon)
    &\geq \sum\limits_{j\in[T]} P(\| \texttt{cdyn}_j(t_i,G,v)- \varphi_j(t_i, G,v) \| \leq \frac{\epsilon}{T})\\
    &\overset{\hyperlink{dagger}{\textbf{($\bm{\dagger}$)}}}{\geq}  \sum\limits_{j\in[T]} (1-\frac{\lambda}{T}) = T - \lambda \geq 1-\lambda.
\end{align*}
\end{proof}

\begin{remark}
    In continuous-time dynamic graphs, structural changes over time can lead to disconnected graphs. For example, when nodes are added without immediate connections or when nodes or edges are removed. It is therefore essential to account for disconnected graphs in the theoretical analysis of the approximation capabilities. While prior results on unfolding tree depth assume connected graphs \cite{d2021unifying}, we have shown in Thm.~\ref{thm:unfolding_tree_depth_disconnected} that the same bounds apply to disconnected graphs. This extension ensures that our proof of the universal approximation capability of CGNNs in Thm.~\ref{cdyn_thm_approximation} holds for all bounded continuous-time dynamic graphs, regardless of connectivity.
\end{remark}

Now we examine the approximation capabilities of CGNNs consisting of universal components, as defined below.

\begin{definition}[Universal Components] \label{def:universal_static}
Let $f^{(i)}$ be a transition function indicating the concatenation of the $\texttt{COMB}^{(i)}$ and $\texttt{AGGR}^{(i)}$ functions, i.e.,
\begin{equation*} 
     f^{(i)}\left[\bm{h}_v^{(i)},\{\bm{h}^{(i-1)}_u\}_{u \in \set{N}_v},\; \Omega'(\set{N}_v) \right]  
    = \  \texttt{COMB}^{(i)}\left[ \bm{h}_v^{(i-1)}, \texttt{AGGR}^{(i)}\left(\{ \bm{h}_u^{(i-1)}\}_{ u \in \set{N}_v}, \Omega'(\set{N}_v)  \right)\right],
\end{equation*}
denoted as $\bm{q}^{(i)}$ for every iteration $i\in[L]$.
Then, a class $\set{Q}_S$  of SGNN models has \textbf{universal components} if, for any $\epsilon>0$
and continuous target functions $\overline{\texttt{COMB}}^{(i)}$, $\overline{\texttt{AGGR}}^{(i)}$,  $\overline{\texttt{READOUT}}$, 
there exists an \texttt{SGNN} $\in\set{Q}_S$, with functions $\texttt{COMB}_\theta^{(i)}$, $\texttt{AGGR}_\theta^{(i)}$, $\texttt{READOUT}_\theta$ and parameters $\theta$, so that for any embedding $\bm{h}\in \mathbb{R}^r$ it holds
\begin{align*}
\left\|{\bar f}^{(i)}(\bm{h}_v,\{ \bm{h}_{u}\}_{u\in\set{N}_v}, \Omega'(\set{N}_v)) - f_\theta^{(i)}(\bm{h}_v,\{ \bm{h}_{u}\}_{u\in\set{N}_v}, \Omega'(\set{N}_v)) 
\right\|_\infty \leq \epsilon \\
\left\| \overline{\texttt{READOUT}}( \bar{\bm{q}}^{(L)})-\texttt{READOUT}_\theta( \bm{q}_\theta^{(L)})\right\|_\infty \leq  \epsilon\, ,
\end{align*}
The transition functions ${\bar f}^{(i)}$, $f_\theta^{(i)}$ and outputs $\bar{\bm{q}}^{(L)}, \bm{q}_\theta^{(L)}\in \mathbb{R}^r$ after $L$ iterations correspond to the target function and the \texttt{SGNN}, respectively.
\end{definition}

\begin{definition}[CGNNs with Universal Components] \label{def:universal_cdyn}
A class $\set{Q}_{\set{D}}$ of CGNN models has \textbf{universal components} if and only if for any continuously differentiable target functions 
$\overline{f}$, $\overline{\texttt{READOUT}}_{\text{cdyn}}$ there is a \texttt{CGNN}$_\theta\in\set{Q}_{\set{D}}$ with parameters $\theta$, utilizing an \texttt{SGNN} that has universal components, a continuously differentiable recurrent function $f_\theta$ and an output function $\texttt{READOUT}_\theta$ such that for any $\epsilon_1,\epsilon_2 > 0$ and any hidden representations $\bm{h}\in\mathbb{R}^r$, $\bm{q},\bm{q}^L\in\mathbb{R}^s$ and number of layers $L$ it holds:
\begin{align*}
\left\| \overline{f}( \bm{q}, \bm{h})-f_\theta( \bm{q},\bm{h})\right\|_\infty &\leq \epsilon_1, \; \\
\left\| \overline{\texttt{READOUT}}_{\text{cdyn}}( \bm{q}^L)-\texttt{READOUT}_{\text{cdyn},\theta}( \bm{q}^L)\right\|_\infty &\leq \epsilon_2.
\end{align*}
\end{definition}

Then, the following theorem indeed establishes that CGNNs with universal components approximate CGNNs that fulfill Thm.~\ref{cdyn_thm_approximation} appropriately.

\begin{theorem}[Approximation by Neural Networks]\label{cdyn_mainNN}
	Assume that the hypotheses of Thm.~\ref{cdyn_thm_approximation} are fulfilled and 
	$\set{Q}_{\set{D}}$ is the class of CGNNs with universal components.
	Then, there exists a parameter set $\theta$, and the functions 
	$\overline{f}$, $\overline{\texttt{READOUT}}_{\text{cdyn}}$,  implemented by Neural Networks in $\set{Q}_{\set{D}}$, such that Thm.~\ref{cdyn_thm_approximation} holds. 
\end{theorem}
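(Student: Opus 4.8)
The plan is to read this as an \emph{instantiation} result layered on top of Thm.~\ref{cdyn_thm_approximation}. That theorem already produces an \emph{ideal} CGNN whose realization $\varphi$ approximates $\texttt{cdyn}$ in probability, assembled from a target temporal function $\overline{f}$ (an RNN), an \texttt{SGNN} with $2N-1$ layers, and a target readout $\overline{\texttt{READOUT}}_{\text{cdyn}}$. The content of the present statement is only that these idealized blocks can be replaced by genuine neural networks drawn from a class $\set{Q}_{\set{D}}$ with universal components (Def.~\ref{def:universal_cdyn}, Def.~\ref{def:universal_static}) without losing the guarantee \eqref{eq:cdyn_apx}. So the whole argument is a controlled-perturbation / error-propagation estimate.

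First I would fix the tolerance budget. Invoke Thm.~\ref{cdyn_thm_approximation} with precision $\epsilon/2$ and level $\lambda$ to obtain the ideal $\varphi$ with $P(\|\texttt{cdyn}-\varphi\|\le \epsilon/2)\ge 1-\lambda$. I then take the ideal, continuously differentiable components $\overline{f}$, $\overline{\texttt{READOUT}}_{\text{cdyn}}$ together with the ideal \texttt{SGNN} transition and readout functions as the \emph{target} functions in Def.~\ref{def:universal_cdyn} and Def.~\ref{def:universal_static}, and use the universal-components property to select a $\texttt{CGNN}_\theta\in\set{Q}_{\set{D}}$, with realization $\varphi_\theta$, whose neural components $f_\theta$, $\texttt{SGNN}_\theta$, $\texttt{READOUT}_\theta$ approximate those targets uniformly within tolerances $\epsilon_1,\epsilon_2$ and per-layer \texttt{SGNN} tolerance $\delta$, all to be fixed at the end.

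The core of the proof is bounding the deterministic discrepancy $\|\varphi(t_i,G,v)-\varphi_\theta(t_i,G,v)\|$ by induction over the finitely many timestamps $t_0,\dots,t_T$. Writing $e_i\da\|\bm{q}_v(t_i)-\bm{q}^\theta_v(t_i)\|$ for the state discrepancy and letting $\delta$ bound the per-timestamp \texttt{SGNN} output error $\|\bm{h}_v(t_i)-\bm{h}^\theta_v(t_i)\|$ (itself controlled by propagating the $\delta$-close $\texttt{COMB}/\texttt{AGGR}$ approximations through the $2N-1$ layers via Lipschitzness of the ideal \texttt{SGNN}), I split $f(\bm{q}_v(t_{i-1}),\bm{h}_v(t_i))-f_\theta(\bm{q}^\theta_v(t_{i-1}),\bm{h}^\theta_v(t_i))$ by adding and subtracting $f(\bm{q}^\theta_v(t_{i-1}),\bm{h}^\theta_v(t_i))$. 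Using that $\overline{f}$ is continuously differentiable, hence $L_f$-Lipschitz on the relevant compact set, this yields the linear recurrence $e_i\le L_f(e_{i-1}+\delta)+\epsilon_1$ with $e_0\le\delta$. Since $T$ is finite, it unrolls to $e_T\le C(L_f,T)(\delta+\epsilon_1)$, which tends to $0$ as $\delta,\epsilon_1\to0$; a final application of Lipschitzness of $\overline{\texttt{READOUT}}_{\text{cdyn}}$ plus the $\epsilon_2$ readout tolerance converts $e_T$ into a bound on $\|\varphi-\varphi_\theta\|$. Choosing $\delta,\epsilon_1,\epsilon_2$ small enough makes $\|\varphi-\varphi_\theta\|\le\epsilon/2$ uniformly, and the triangle inequality $\|\texttt{cdyn}-\varphi_\theta\|\le\|\texttt{cdyn}-\varphi\|+\|\varphi-\varphi_\theta\|$ then gives $P(\|\texttt{cdyn}-\varphi_\theta\|\le\epsilon)\ge 1-\lambda$.

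The hard part will be the uniformity of the Lipschitz constants and of the neural approximations: the components of an \texttt{SGNN}/RNN are only uniformly approximable by neural networks on \emph{compact} domains, whereas the attribute space $\set{A}\subseteq\mathbb{R}^d$ and the induced embeddings are a priori unbounded, and the geometric factor $L_f^T$ in the unrolled recurrence could amplify residual errors. I would resolve this by the standard measure-theoretic localization: since $\set{T}$ and $\set{V}$ are finite and $\set{G}$ has bounded node and timestamp sets, I restrict to a compact subset $K\subseteq\set{D}$ with $P(K)$ close to $1$ on which attributes, and hence all intermediate embeddings $\bm{q}_v(t_i),\bm{h}_v(t_i)$, lie in a fixed compact set. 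There the ideal functions admit uniform Lipschitz constants and the universal-components approximations are uniform, so the finite product $C(L_f,T)$ is a fixed constant that the tolerances $\delta,\epsilon_1,\epsilon_2$ can be chosen to absorb.
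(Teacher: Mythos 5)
Your proposal follows essentially the same route as the paper's own (sketched) proof: invoke Thm.~\ref{cdyn_thm_approximation} to obtain the ideal CGNN, use boundedness of the domain (in the paper via adapting the cited discrete-dynamic result, in your version via probabilistic localization to a compact set) to get bounded Jacobians, i.e.\ Lipschitz constants, for its piece-wise differentiable components, and then replace each component by a neural approximation from the universal-components class, controlling the accumulated error over the finitely many timestamps. Your explicit tolerance budget, error recurrence $e_i\le L_f(e_{i-1}+\delta)+\epsilon_1$, and triangle-inequality step simply spell out what the paper's sketch asserts when it says the assembled components ``approximate the respective parts of $\overline{\texttt{CGNN}}$ closely enough,'' so the attempt is a correct and more detailed rendering of the same argument.
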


\begin{proof}[Sketch of the Proof.]
    Based on Thm.~\ref{cdyn_thm_approximation}, we know that there exists a $\overline{\texttt{CGNN}}$ capable of universally approximating functions in $\set{F}(\set{D})$. Adapting \cite[Thm.~A.5.2]{beddar2024weisfeiler} to timestamps, this domain can be assumed to be finite and bounded, and the (piece-wise) differentiable functions in $\overline{\texttt{CGNN}}$ are also bounded (between consecutive events) and, thus, have bounded Jacobians. Consequently, we can construct a \texttt{CGNN}$_\theta \in \set{Q}_{\set{D}}$ by assembling universal \texttt{CGNN} components for each inter-event interval. These components approximate the respective parts of $\overline{\texttt{CGNN}}$ closely enough so that the entire \texttt{CGNN}$_\theta$ remains a universal approximator on the domain $\set{G}$.
\end{proof}

\newpage

\section{Discussion and Conclusion}\label{section_conclusion}
This paper advanced the theoretical analysis of GNN expressivity on discrete-time to continuous-time dynamic graphs (CTDGs) with general connectivity. We introduced a continuous-time extension of the 1-WL test along with corresponding unfolding trees (Sec.\ref{section_preliminaries}) and showed that both characterizations yield equivalent graph partitions in the continuous-time setting. Based on this correspondence, we provided a constructive proof that a broad class of Continuous-Time Dynamic GNNs (CGNNs) matches the distinguishing power of the continuous-time 1-WL and can universally approximate any measurable function that respects continuous-time unfolding tree equivalence (Sec.\ref{section_universal_apx_CGNN}). 

Our constructive analysis reveals that the static GNN component of a universal approximator CGNN requires only a single hidden dimension and at least $2N-1$ layers to operate on graphs with up to $N$ nodes. Moreover, the recurrent update function can also be realized with a one-dimensional state. We also demonstrated that it is sufficient for this temporal function to be piece-wise continuously differentiable between successive events to preserve the CGNN’s universal approximation capability.

Future work may extend the expressivity analysis to alternative CGNN architectures beyond the specific combination of static SGNNs and temporal update functions considered here. Moreover, while our results focus on the expressive power, models with equivalent expressivity may differ significantly in terms of efficiency, scalability, and generalization, highlighting the need to better understand how architectural choices in \texttt{AGGR}, \texttt{AGGR}, and \texttt{READOUT} affect these practical aspects. The smooth 1–WL test and its unfolding-tree perspective will provide a solid foundation for empirical studies on large-scale benchmarks and inspire further refinements through novel temporal encodings or attention mechanisms.


\bibliographystyle{splncs04}
\bibliography{bibliography} 

\end{document}